\newtheoremstyle{mystyle}
  {}
  {}
  {\itshape}
  {}
  {\bfseries}
  {.}
  { }
  {}
\theoremstyle{mystyle}
\newtheorem{theorem}{Theorem}
\newtheorem{definition}{Definition}
\newtheorem{assumption}{Assumption}
\begin{document}

\graphicspath{{./Figures/}}

\title{FedGen: Generalizable Federated Learning for Sequential Data}
\author{}
\author{\IEEEauthorblockN{Praveen Venkateswaran}
\IEEEauthorblockA{\textit{IBM Research} \\
praveen.venkateswaran@ibm.com}
\and
\IEEEauthorblockN{Vatche Isahagian}
\IEEEauthorblockA{\textit{IBM Research} \\
vatchei@ibm.com}
\and
\IEEEauthorblockN{Vinod Muthusamy}
\IEEEauthorblockA{\textit{IBM Research} \\
vmuthus@us.ibm.com}
\and
\IEEEauthorblockN{Nalini Venkatasubramanian}
\IEEEauthorblockA{\textit{UC Irvine} \\
nalini@uci.edu}
}

\maketitle

\begin{abstract}

Existing federated learning models that follow the standard risk minimization paradigm of machine learning often fail to generalize in the presence of spurious correlations in the training data. In many real-world distributed settings, spurious correlations exist due to biases and data sampling issues on distributed devices or clients that can erroneously influence models. Current generalization approaches are designed for centralized training and attempt to identify features that have an invariant causal relationship with the target, thereby reducing the effect of spurious features. However, such invariant risk minimization approaches rely on apriori knowledge of training data distributions which is hard to obtain in many applications. In this work, we present a generalizable federated learning framework called FedGen, which allows clients to identify and distinguish between spurious and invariant features in a collaborative manner without prior knowledge of training distributions. We evaluate our approach on real-world datasets from different domains and show that FedGen results in models that achieve significantly better generalization and can outperform the accuracy of current federated learning approaches by over $24\%$.
\end{abstract}
\begin{IEEEkeywords}
federated learning, out-of-distribution robustness, domain generalization, sequential classification.
\end{IEEEkeywords}

\section{Introduction}\label{sec:introduction}

The rise in popularity of edge devices (e.g., mobile phones, wearables, IoT sensors) has resulted in the generation of massive amounts of data, resulting in the need for distributed model training over a large number of such devices for many machine learning applications. 
Traditional cloud-based ML approaches often require the data to be moved to central cloud servers \cite{chilimbi2014project, dean2012large}, which further exacerbates network bandwidth demands, data privacy concerns, and other security issues \cite{sharma2021image}. 
Federated Learning (FL) using the cloud has emerged as an attractive paradigm \cite{xu2020fedmax, khan2021federated}, allowing local devices or clients to collaboratively train a shared global model without any data sharing. The typical federated learning paradigm involves two stages - (i) clients train models with their local datasets independently, and (ii) a central server gathers the locally trained models and aggregates them to obtain a shared global model. 

A standard approach for federated learning has been FedAvg \cite{mcmahan2017communication} where parameters of local models are averaged element-wise with weights proportional to sizes of the client datasets. However, FedAvg has been shown to have several shortcomings \cite{wang2020federated,kairouz2019advances}, in particular its performance when the data across clients are non-iid. While several improvements have been proposed to handle non-iid data \cite{li2018federated,mohri2019agnostic}, they rely on empirical risk minimization (ERM) techniques which has been shown by numerous examples to be brittle in generalizing to out-of-distribution (OOD) data. 

This is partly due to being influenced by spurious correlations and data biases that fail to hold outside training data distributions \cite{tzeng2017adversarial,recht2019imagenet}. A classic example was highlighted by \cite{beery2018recognition} where a model, trained to classify images of cows in pastures and camels in the desert, failed when the backgrounds were switched because it was influenced by the \textit{spurious correlation} (i.e., green pastures with cows and sandy deserts with camels) rather than relying on the \textit{invariant features} (i.e., the cows and camels themselves). 
In another example, \cite{degrave2021ai} showed how models trained to detect COVID-19 from chest radiographs relied on spurious features (e.g. hospital metadata) rather than medical pathology features, and failed when tested in new hospitals. 


\begin{figure}[!t]
\centering
\begin{minipage}[b]{.49\linewidth}
\centering
\includegraphics[width=\textwidth]{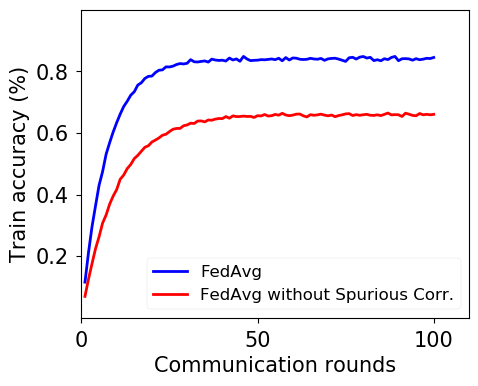}
\subcaption{Training Accuracy}
\label{fig:intro_train_accuracy}
\end{minipage}
\begin{minipage}[b]{.495\linewidth}
\centering
\includegraphics[width=\textwidth]{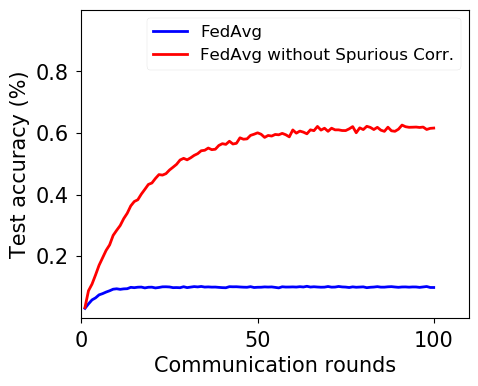}
\subcaption{Test Accuracy}
\label{fig:intro_test_accuracy}
\end{minipage}
\caption{Comparing the generalizability of FedAvg with and without spurious correlations in the training data}
\label{fig:intro_example}
\end{figure}

To demonstrate the sensitivity of FedAvg and associated empirical risk minimization approaches to spurious correlations in the training data, we use the Punctuated SST-2 dataset \cite{choe2020empirical}. 
It consists of sentences and their binary sentiment labels divided into two training distributions. 
A punctuation mark, either a '!' or '.', is introduced as a spurious feature with an $80\%$ and $90\%$ spurious correlation with each of the binary sentiment labels in the two training distributions respectively, and only has a $10\%$ correlation in the test distribution (i.e.) out-of-distribution (OOD) setting. 
Hence, any model relying on this punctuation feature rather than the sentence while predicting the sentiment would do well during training, but perform poorly on the test set where the spurious correlation no longer holds. 

We distribute the training data over ten clients and compare the performance of FedAvg on data with the spurious feature against its performance if the spurious feature was not present. Figure \ref{fig:intro_example} shows the training accuracy and the resulting out-of-distribution test accuracy of FedAvg in the two cases. We observe that when the spurious feature is present, FedAvg is influenced significantly by it, thereby resulting in a higher training accuracy than if it were not present. However, this influence results in poor generalization, and it demonstrates very low test accuracy on the out-of-distribution (OOD) data. The poor generalizability of FedAvg can be clearly seen from the performance of the two cases in Figure \ref{fig:intro_example}, since the only difference in the datasets is the presence of a spurious feature. 

The example above highlights the drawbacks of existing federated learning approaches. In this work, we address the drawbacks by developing a generalizable federated learning approach called FedGen. Our approach develops a distributed masking function that enables clients to collaboratively identify spurious features or correlations in the data thereby ensuring that the resulting aggregated model relies only on the invariant features. 
Prior work on generalization has predominantly focused on centralized training settings for computer vision tasks. However, given the increased use of time-series and sequential data for distributed edge and cloud applications, where spurious correlations can often be present, it serves as the focus of this work. 

Our contributions can be summarized as follows:
\begin{itemize}
    \item We present a generalizable federated learning framework (FedGen) that can handle the presence of spurious correlations or biases in the training data.
    \item FedGen uses a distributed masking function that allows the clients to collaboratively identify and distinguish between spurious and invariant features during the training process even as each client trains on only its local data. 
    \item We formally prove the correctness and present results on the convergence of FedGen. 
    \item We compare our framework with standard centralized training as well as existing federated learning approaches on a variety of sequential and time-series datasets from real-world domains and demonstrate the significantly improved generalizability of aggregated models using FedGen.
\end{itemize}


\section{Background}\label{sec:background}

Consider a set of $K$ devices or clients, each with data samples drawn from $D_k \in \mathcal{D}$ distributions on $X \times Y$, where $X$ is the set of input features and $Y$ is the target variable.
We denote $n_k$ as the number of data samples on device $k \in K$, and $N = \sum_{k=1}^K n_k$ as the total number of samples across the $K$ devices.

The general federated learning problem involves learning a single global statistical model from these $K$ devices under the constraint that device-generated data are stored and processed locally, with only intermediate model parameter updates being communicated periodically with a central server, which aggregates them. The goal is typically to minimize the following objective function:
\begin{equation}
    \min_w F(w) \;, \; \text{where} \; F(w) = \sum_{k=1}^K \frac{n_k}{N} f_k(w_k)
\end{equation}
where $f_k$ is the local objective function for the $k$th device. The local objective function is often defined by existing approaches using Empirical Risk Minimization (ERM) which minimizes the average training loss in a distribution agnostic manner:
\begin{equation}\label{eq:federated_local_loss}
    f_k(w_k) = \mathbb{E}_{(x_k,y_k) \sim {\mathcal{D}}}[\ell(x_k,y_k);w_k]
\end{equation}

where $x_k \in X$, $y_k \in Y$, and $\ell$ is a given loss function.
One of the popular aggregation methods is Federated Averaging (FedAvg) \cite{mcmahan2017communication} which uses ERM and averages the local model parameters of the devices element-wise during aggregation as summarized in Algorithm \ref{alg:fedavg}. While ERM has been shown to work well in practice for i.i.d. data \cite{vapnik1992principles}, it has been shown to fail when (i) training distributions on the devices are non-iid and leading each device to optimize its local objective as opposed to the central objective -- potentially hurting convergence or even causing divergence of the aggregated model \cite{bonawitz2019towards}, and (ii) test distributions differ significantly from training distributions which is a natural consequence when spurious correlations are present in the training data \cite{tzeng2017adversarial}.

Several alternative approaches to FedAvg have been proposed to address the first issue. FedProx \cite{li2018federated} limits the impact of local updates by keeping them close to the global model where they add a proximal term to the local client objective functions $f_k$:
\begin{equation}\label{eq:fedprox}
    f_k(w_k) + \frac{\mu}{2} || w_k - w^t ||^2
\end{equation}
where the proximal term penalizes large deviations of the client model parameters, $\mu$ is a hyper-parameter and we see that when $\mu = 0$, this reduces to FedAvg. 
Agnostic Federated Learning (AFL) \cite{mohri2019agnostic} is another variant of FedAvg, which optimizes a centralized distribution that is a mixture of client distributions $D_k$ by modeling it as $D_\lambda = \sum \lambda_k D_k$, where $\lambda$ is the mixture weight. Their goal is to find a solution that works for any $\lambda$. However, these approaches assume that all correlations present in the training distributions are invariant, and therefore cannot handle spurious correlations resulting in aggregated models that are not generalizable as we will demonstrate. 

Finding the set of invariant features has emerged as a popular approach for generalization in centralized training settings. Invariant Risk Minimization (IRM) \cite{arjovsky2019invariant} tries to find an invariant data representation $\Phi(X)$ and assumes that the training data comes from multiple distributions, and that features whose distributions vary across the training data are likely to also vary between the training and test datasets and hence should be treated as spurious correlations. They define the following loss or risk function:
\begin{equation}
    \min_\Phi \sum_{D \in \mathcal{D}} \mathcal{L}_{D}(\Phi(X_D)) + \lambda || \nabla_w \mathcal{L}_D(w^\top \Phi(X_D)) ||^2_2
\end{equation}
where the second term serves as a regularizer that balances between predictive power within a distribution (ERM), and the invariance of the predictor across distributions. 
There have been several extensions to the IRM framework.  For instance \cite{ahuja2020invariant} propose a game theoretic approach to IRM, while \cite{krueger2021out} introduce the notion of risk extrapolation to encourage strict equality between training risks.

While the goal of identifying the invariant feature set has been shown to be effective for out-of-distribution generalization over a variety of datasets and serves as a base for our approach, IRM based approaches have been shown to have inherent drawbacks \cite{nagarajan2020understanding, rosenfeld2020risks}. Of particular importance is its reliance on prior knowledge of training data distributions, absent which, its performance is similar to ERM \cite{venkateswaran2021environment}. This knowledge is especially hard to obtain in federated settings, where the local data on each client can itself be composed of multiple distributions and only a subset of clients may participate. In this work, our goal is to bring the benefits of generalization to federated learning, while not requiring any prior knowledge of the distributions in the training data.

\begin{algorithm}[!t]
\caption{Federated Averaging \texttt{FedAvg}}
\begin{algorithmic}[1]
{
\Require Devices $k \in K$, local epochs $E$, learning rate $\eta$, global model $w$ randomly initialized
\For{$t = 1 \to T$}
\State Server selects subset $K_t$ of $K$ devices at random
\State Server sends $w^t$ to devices in $K_t$
\For{each client $k \in K_t$ in parallel}
\State Update $w^t$ for $E$ epochs of SGD to get $w_k^{t+1}$
\State Send $w_k^{t+1}$ back to the server
\EndFor
\State Server aggregates $w^{t+1} \leftarrow \sum_{k=1}^K \frac{n_k}{N} w_k^{t+1}$
\EndFor
}
\end{algorithmic}
\label{alg:fedavg}
\end{algorithm}

\section{FedGen: Generalizable Federated Learning}

As described in Section \ref{sec:introduction}, generalizable models need to be able to distinguish between spurious and invariant features, thereby ignoring spurious correlations with the target variable present in training data. Given the set of input features $X$ used by the device models, the set of invariant features $X^I$ across all devices is one where the target prediction probability is consistent across all data distributions, (i.e.) $p(Y | X_i$  $ \in X^I, \mathcal{D})$ is approximately constant. 
Conversely, the set of spurious features $X^S$ consists of features whose prediction probabilities differ across distributions due to the spurious correlations. Hence, it follows that $X^I \cup X^S = X$, and $X^I \cap X^S = \emptyset$, (i.e.) a feature cannot be both invariant and spurious.


Our approach to determining whether the $i^{\text{th}}$ feature $X_i \in X$ is spurious or invariant, is by measuring the stability of its parameter weight $w_i$. Recent work \cite{javed2020learning,venkateswaran2021environment} has shown that if $X_i$ is a causal or invariant feature, $w_i$ converges to a fixed magnitude, (i.e.) $\mathbb{E}[Y|X_i] = c$ for some constant value $c$, across all training epochs. Whereas if $\mathbb{E}[Y|X_i]$ is changing, $w_i$ would keep changing as well, and hence spurious features have parameter weights that exhibit high variance. 
This definition is equivalent to learning features whose correlations with the target variable are stable.

We leverage this intuition for our federated setting and define a set of masks $M_k = \{m_{1k}, ..., m_{jk}\}$ over the input features for each of the $k \in K$ clients, where $m_{ik} \in \mathbb{R}$ and $j$ is the number of input features in $X$. 
We update the masks during training by using the variances in the feature weights to emphasize invariant features and suppress spurious ones. During each training epoch, we update the local masks for each client $k \in K$ as:
\begin{equation}\label{eq:updation_a}
    m_{ik} \gets m_{ik} + \frac{1}{j} \sum_{i=1}^j v(w_{ik}) - \alpha(v(w_{ik})), \; \forall m_{ik} \in M_k
\end{equation} 

where hyper-parameter $\alpha$ serves as a scaling factor, $v(w_{ik})$ is the variance of the weights of feature $X_i$ on client  $k$, and the second term is the average variance of all feature weights on client $k$. From the above intuition, we know that the variance of invariant features is low and that of spurious features would be high. Hence, we see that updating the local masks on each client using Equation \ref{eq:updation_a} results in the masks of invariant features gaining in value on each of the $K$ clients since their variance is lower than the average (i.e.) $ \frac{1}{j} \sum_{i=1}^j v(w_{ik}) - \alpha(v(w_{ik})) > 0$. 
Masks of spurious features on the other hand, become progressively negative, since the variance of their weights coupled with the scaling factor is larger than the average which is brought down by invariant features. 
Since the masks themselves are unbounded, we retain the scale of the original feature values by scaling the masks using the sigmoid function $\sigma$ which is bounded between [$0,1$]. Hence, using the masks and the update function in Equation \ref{eq:updation_a} on the input features for each local client results in:
\begin{equation}\label{eq:representation}
      \sigma(m_{ik}) \odot X_i \rightarrow
  \begin{cases}
    X_i & \text{if $X_i$ is invariant} \\
    0 & \text{if $X_i$ is spurious}
  \end{cases} \; ~~
  ,\forall X_i \in X
\end{equation}

where $\odot$ denotes element-wise multiplication. We then define the local loss on each client in FedGen as:
\begin{gather}\label{eq:fedgen_penalty}
    \underset{w_k}{\mathrm{min}} \; \underbrace{f_k(Z_k, w_k)}_{\text{local loss}} + \underbrace{\lambda \lVert \nabla_{w_k} w_k^\top f_k(Z_k, w_k) \rVert^2_2}_{\text{\texttt{FedGen} penalty}} \; , \; \textrm{where} \\
    f_k(Z_k, w_k) = \mathbb{E}_{(x_k,y_k) \sim D_k}[\ell(Z_k, y_k);w_k] \;, \; \text{s.t.} \; Z_k = \sigma(M_k) \odot x_k
\end{gather}

The penalty term here like IRM serves as a regularizer for the local models that penalizes doing too well on one data distribution (i.e., possibly relying on spurious features) and rewards doing well across distributions (i.e., relying on invariant features).
FedGen then builds a centralized model by communicating and subsequently aggregating both the local model parameters as well as the local masks of the participating devices or clients. We use the same element-wise averaging for aggregation as used by prior work:
\begin{equation}\label{eq:mask_aggregation}
    w^{t+1} = \sum_{k=1}^K \frac{n_k}{N} w_k^{t+1} , \;\; 
    M^{t+1} \leftarrow \sum_{k=1}^K \frac{n_k}{N} M_k^{t+1}
\end{equation}

Aggregating the masks across the $K$ devices or clients allows them to obtain a consensus on invariant and spurious features across the different data distributions that can be present in their local training data. Masks that are positive (i.e. reflecting invariant features) across most clients will continue to be emphasized after aggregation, and likewise for the negative spurious feature masks. The mask aggregation also allows client models relying on the wrong features to obtain the global consensus and ignore spurious features. We define the FedGen central objective function as:
\begin{equation}\label{eq:fedgen_central_objective}
    \min_w F(Z ,w) \;, \; \text{where} \; F(Z, w) = \sum_{k=1}^K \frac{n_k}{N} f_k(Z_k, w_k)
\end{equation}

We summarize our FedGen approach in Algorithm \ref{alg:fedgen}.
We next formally prove that FedGen results in an aggregated model that is generalizable in Theorem \ref{thm:mask} by formulating a minimax problem and showing that the resulting model minimizes the loss using invariant features, even under the most adverse unseen test distributions. 

\begin{algorithm}[!t]
\caption{Federated Generalization \texttt{FedGen}}\label{alg:fedgen}
\begin{algorithmic}[1]
\Require Devices $k \in K$, local epochs $E$, learning rate $\eta$, global model $w$ randomly initialized
\Require Local masks $M_k \in M$, scaling factor $\alpha$, warm up epochs $e^{init}$
  \For{$t = 1 \to T$}
    \State Server selects subset $K_t$ of $K$ devices at random
    \State Server sends $w^t$ to devices in $K_t$
  \EndFor
  \State \textbf{ClientUpdate:}
  \State For each client $k \in K$,
  \State Initialize masks $m_{ik} = 1,  \; \forall m_{ik} \in M_k$ 
  \State Initialize mean $\textbf{u}_{ik} = 0$, variance $\textbf{v}_{ik} = 0$
\For{$e = 1 \to E$}
\State Sample batch $x_k, y_k$
\State $ \ell_{loc}=[\ell(\sigma(M_k) \odot x_k,y_k);w_k]$ \algorithmiccomment{prediction error}
\State $\ell_1 = \lVert w_k \rVert_1$ \algorithmiccomment{$L1$ regularization}
\State $\ell_{pen} = \lambda \lVert \nabla_{w_k} w_k^\top f_k(Z_k, w_k) \rVert^2_2$ \algorithmiccomment{FedGen penalty}
\State $\mathcal{L} = \ell_{loc} + \ell_{1} + \ell_{pen}$ \algorithmiccomment{Total loss}
\State $w_k = w_k - \eta\nabla w_k \mathcal{L}$
\State $\textbf{u}_{old} = \textbf{u}_{k}$
\State $\textbf{u}_k = \beta w_k + (1-\beta)\textbf{u}_{old}$ \algorithmiccomment{Mean estimate}
\State $\textbf{v}_k = \delta \textbf{v}_k + (1-\delta)(w_k - \textbf{u}_{old})^2$ \algorithmiccomment{Variance Estimate}
\If{$e > e^{init}$} \algorithmiccomment{Update masks}
\State $m_{ik} \mathrel{+}= \frac{1}{j} \sum_{i=1}^j \textbf{v}_k(w_{ik}) - \alpha(\textbf{v}_k(w_{ik})), \forall m_{ik} \in M_k$ 
\EndIf
\EndFor
\State Server aggregates $w^{t+1} \leftarrow \sum_{k=1}^K \frac{n_k}{N} w_k^{t+1}$
\State Server aggregates $M^{t+1} \leftarrow \sum_{k=1}^K \frac{n_k}{N} M_k^{t+1}$
  \end{algorithmic}
\end{algorithm}

\begin{theorem}\label{thm:mask}
Given training distributions $D^{tr}$ and a test distribution $D^{te}$, the set of invariant features $X^I$ is the saddle point of the following minimax problem:
\begin{equation*}
    X^I = \underset{w}{\textup{min}} \; \underset{X^I, X^S}{\textup{max}} F(Z, w ; D^{te}) \;\;, \; \textup{where} \;\; Z = \sigma(M) \odot X \;,
\end{equation*}
where $F(Z, w)$ is the FedGen central objective function, $M$ the final aggregated masks, and $X^I, X^S$ denote the set of invariant and spurious features respectively such that $X^I \cup X^S = X$, and $X^I \cap X^S = \emptyset$ (i.e.) they are disjoint.
\end{theorem}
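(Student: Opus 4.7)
My plan is to cast the theorem as a fixed-point argument for a two-player min-max game and then exploit the mask identity in Equation 5 to collapse the game onto the true invariant feature set.

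First I would invoke Equation 5: by construction, the FedGen update in Equation 4 drives $m_{ik}$ positive for invariant features and negative for spurious ones, so after passing through the sigmoid we have $\sigma(M) \odot X_i = X_i$ for $X_i \in X^I$ and $\sigma(M) \odot X_i \approx 0$ for $X_i \in X^S$. Hence $Z = \sigma(M) \odot X$ acts as a hard selector on $X^I$, and we may rewrite $F(Z, w; D^{te}) = F(X^I, w; D^{te})$, a risk that sees only invariant inputs. This reduction already encodes most of the content of the theorem; what remains is to argue that the true partition $(X^I, X^S)$ is the saddle point of the game against an adversary allowed to propose alternative partitions.

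Next I would verify the saddle-point conditions at the candidate $(w^\ast, X^I)$, where $w^\ast$ minimizes $F(X^I, w; D^{te})$. For the $\min_w$ direction, because invariant features satisfy $p(Y \mid X^I, D) = p(Y \mid X^I)$ for every $D \in \mathcal{D}$, the risk $F(X^I, w; D^{te})$ coincides with the training risk minimized by the FedGen objective in Equation 6; consequently the $w^\ast$ obtained by Algorithm 2 is exactly the minimizer. For the outer $\max$ direction, I would argue by contradiction: if the adversary relabels a genuinely spurious feature as invariant, then by the definition $X^S = X \setminus X^I$ there is a distribution in $\mathcal{D}$ (admissible as $D^{te}$) on which $p(Y \mid X^S)$ diverges from its training value, and the predictor that weights it pays a penalty that strictly exceeds $F(X^I, w^\ast; D^{te})$. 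Conversely, relabeling an invariant feature as spurious removes genuine signal and also increases the risk. Hence $(X^I, X^S)$ is a stationary point for the adversary. Combining the two directions with the standard saddle-point equivalence $\min_w \max F = \max \min_w F$ (available because $F$ is convex in $w$ for the linear predictor implicit in Equation 6 and the partition variable is discrete) yields the claim.

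The main obstacle I anticipate is the $\max$ direction, because the theorem statement is implicit about the family of adversarial test distributions: some assumption analogous to the IRM setting is needed to guarantee that a relabeled spurious feature can in fact be exposed by some admissible $D^{te}$. I would handle this by explicitly restricting to a family $\mathcal{D}$ in which any spurious correlation can be flipped, mirroring the assumption stated at the start of Section 3. A secondary technical issue is that Equation 5 holds only at the convergence of the mask updates; I would either prove a short convergence lemma for the update rule in Equation 4 using the variance-stability argument of \cite{venkateswaran2021environment}, or, more conservatively, state the saddle-point result at the stationary point of the mask dynamics and flag this as a standing assumption in the proof.
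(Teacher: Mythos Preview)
Your plan diverges from the paper's argument in one concrete device, and your outer-$\max$ step carries a directionality problem that would need to be repaired.

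The paper does not verify saddle-point conditions directly, nor does it invoke any $\min\max=\max\min$ swap. Instead it introduces a single explicit witness distribution $D^{*te}$ on which the spurious block $X^{S*}$ is, by construction, completely non-predictive of $Y$. Using the mask identity (your first step, which the paper also uses) together with this witness it obtains $F(Z,w;D^{*te})=F(X^I,w;D^{*te})$ via a short cross-entropy calculation. Two one-line observations then close the argument: (i) $\max_{X^S}F(Z,w;D^{te})\ge F(Z,w;D^{*te})$ because $D^{*te}$ is one admissible test distribution among those being maximized over; and (ii) $\max_{X^S}F(X^I,w;D^{te})=F(X^I,w;D^{*te})$ because a predictor that reads only $X^I$ is insensitive to how $X^S$ behaves at test time. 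Chaining these gives $\max_{X^S}F(Z,w;D^{te})\ge\max_{X^S}F(X^I,w;D^{te})$, after which one takes $\max$ over $X^I$ and then $\min$ over $w$. No contradiction argument, no game-theoretic exchange of $\min$ and $\max$, and no reasoning about ``relabeling'' partitions is needed.

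Your own $\max$-direction reasoning has the sign reversed. You argue that if the adversary relabels a spurious feature as invariant, or an invariant feature as spurious, the risk \emph{increases} in both cases. But the adversary is the maximizing player: if every deviation from $(X^I,X^S)$ raises the objective, then $(X^I,X^S)$ is a local \emph{minimum} for the adversary, not a best response, and hence not a saddle point in the usual sense. What the theorem is really asserting (and what the paper's inequality chain establishes) is that the \emph{learner's} choice $Z=X^I$ minimizes the worst-case test risk; the pair $(X^I,X^S)$ is less the adversary's decision variable than a parametrization of the admissible $D^{te}$. Recasting your argument along those lines and replacing your contradiction step with the paper's explicit witness $D^{*te}$ would give the inequality cleanly. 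The $\min\max=\max\min$ swap you propose is also unwarranted here: the outer variable is a discrete partition with no concavity structure, and the paper never relies on it.
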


\begin{proof}

For every $Z$, we can partition it into invariant variables $Z^I$ and non-invariant variables $Z^S$ as:
\begin{equation} \label{eq:zexpressedinsigmaxi}
    Z^I = \sigma(M) \odot X^I\ , \;\;\;\; Z^S = \sigma(M) \odot X^S.
\end{equation}

Consider a test distribution $D^{*te}$, where the set of spurious features $X^{S*}$ are not predictive of the output $Y$, and only the invariant features $X^I$ are predictive of $Y$, (i.e.)
\begin{equation}\label{eq:invariant_predictive}
    p(Y | Z, D^{*te}) = p(Y | Z^I, D^{*te}) , \;\; p(Y | Z, D^{tr}) = p(Y | Z^I, D^{tr})
\end{equation}

\noindent Therefore, $F(Z, w\; ; D^{*te})$
  \begin{align}\label{eq:proof_reduction}
  \begin{split}
    &= H(p(Y | Z, D^{*te}) ; p(Y | Z, D^{tr})) \\
    &\overset{(i)}{=} H(p(Y | Z^I , D^{*te}) ; p(Y | Z^I, D^{tr})) \\
    &\overset{(ii)}{=} H(p(Y | \sigma(M)\odot X^I , D^{*te}) ; p(Y | \sigma(M)\odot X^I, D^{tr})) \\
    &\overset{(iii)}{=} H(p(Y | X^I , D^{*te}) ; p(Y | X^I , D^{tr})) \\
    &= F(X^I, w \; ; D^{*te})
    \end{split}
  \end{align}
  
\noindent where $H(\cdot)$ is the cross-entropy loss function. Step $(i)$ is obtained from applying equation (\ref{eq:invariant_predictive}). Step $(ii)$ is obtained by applying equation (\ref{eq:zexpressedinsigmaxi}), and step $(iii)$ is due to the property of the masks from equation (\ref{eq:representation}).

Recall that $X^{S*}$ was assumed to be non-predictive of $Y$. However, in most cases, the spurious feature $X^S$ would have some predictive power over $Y$ in the training environment.
Hence, from the definition of spurious features $X^S$, their biased influence on the model performance during training will lead to an increased loss in the worst case test environment:
\begin{equation}\label{eq:optimize_spurious_part1}
    \underset{X^S}{\textup{max}} \; F(Z,w ; D^{te}) \geq F(Z,w ; D^{*te}) 
\end{equation}

Recall $X^I$ denotes the set of invariant features, thus  $p(Y | X^I, D^{te})$ does not depend on $X^S$. Therefore, 
\begin{equation}\label{eq:optimize_spurious_part2}
    \underset{X^S}{\textup{max}} \; F(X^I, w ; D^{te}) = F(X^I, w ; D^{*te}) 
\end{equation}

By combining equations (\ref{eq:proof_reduction}), (\ref{eq:optimize_spurious_part1}), and (\ref{eq:optimize_spurious_part2}), we have:
\begin{equation}\label{eq:max_invariant}
    \underset{X^S}{\textup{max}} \; F(Z, w ; D^{te}) \geq \underset{X^S}{\textup{max}} \; F(X^I, w ; D^{te})
\end{equation}

The above formulation holds for all $X^I$. Hence, taking the maximum over $X^I$ in equation (\ref{eq:max_invariant}) preserves the inequality, 
\begin{equation*}
    \underset{X^I, X^S}{\textup{max}} F(Z, w ; D^{te}) \geq \underset{X^I, X^S}{\textup{max}} F(X^I, w ; D^{te})
\end{equation*}
which in turn implies,
\begin{equation*}
    X^I = \underset{w}{\textup{min}} \underset{X^I, X^S}{\textup{max}} F(Z,w ; D^{te})
\end{equation*}
\end{proof}

\section{Convergence Analysis of FedGen}

In this section, we analyze the convergence of FedGen by first introducing some definitions and assumptions for our convergence analysis. We subsequently show the expected convergence of FedGen under both convex and non-convex conditions.

\begin{definition}
(Smoothness)
The function f is L-smooth with constant L $>$ 0 if $\forall x_1, x_2$
\begin{equation}
    f(x_1) - f(x_2) \leq \langle \nabla f(x_2), x_1 - x_2 \rangle + \frac{L}{2} || x_1 - x_2 ||^2
\end{equation}
\end{definition}

\begin{definition}
(Convexity)
The function f is $\mu$-strongly convex with $\mu > 0$ if $\forall x_1, x_2$
\begin{equation}
    f(x_1) - f(x_2) \geq \langle \nabla f(x_2), x_1 - x_2 \rangle + \frac{\mu}{2} || x_1 - x_2 ||^2
\end{equation}
\end{definition}

In order to analyze convergence in heterogeneous non-iid settings with constant step-size (as is usually implemented in practice), we also introduce a notion of dissimilarity between the local device functions in the federated network similar to assumptions made by prior work \cite{li2018federated,vaswani2019fast,yin2018gradient}. 

\begin{definition}
(B-local dissimilarity)
The local functions $f_k$ are B-locally dissimilar at $Z, w$ if $\mathbb{E}||(\nabla f_k(Z, w)||^2 \leq ||\nabla F(Z, w)||^2 B^2$. 
\end{definition}
We further define $B(w) = \sqrt{\frac{\mathbb{E}||\nabla f_k(Z,w)||^2}{||\nabla F(Z, w)||^2}}$ when $|| \nabla F(Z, w) ||^2 \neq 0$. As explained in \cite{li2018federated}, when all the local functions are the same (i.e.) distributions of data sampled from the devices are i.i.d, we have $B(w) \rightarrow 1$ for all $w$. However, in the presence of spurious correlations, the data distributions are heterogeneous and therefore $B > 1$, and larger the value of $B(w)$, the more the heterogeneity of the local data. We next make the following assumption on the objective functions.

\begin{assumption}\label{ass:fedgen}
Assume that:
\begin{enumerate}
    \item The FedGen central objective function F(Z, w) is bounded from below, (i.e.) $\min_w F(Z, w) > -\infty$
    \item There exists $\epsilon > 0$ such that $\mathbb{E}[\nabla f_k(Z, w)] \leq || \nabla F(Z, w) ||$, and $\nabla F(Z, w)^\top \mathbb{E}[\nabla f_k(Z, w)] \geq \epsilon || \nabla F(Z, w) ||^2$ holds for all w.
\end{enumerate}
\end{assumption}

We note that if $\epsilon = 1$, then $\nabla f_k(Z, w)$ is an unbiased estimator of $\nabla F(Z, w)$


\begin{theorem}\label{thm:fedgen_convex_convergence}
Let Assumption \ref{ass:fedgen} hold. Assume F(Z, w) is L-smooth and $\mu$-strongly convex and functions $f_k$ are B-locally dissimilar with learning rate $\eta$. Then we can show that for $\rho = -\eta(\epsilon - \frac{L\eta B^2}{2})$, at iteration $t$ of FedGen, we have the following expected decrease in the global objective:
\begin{multline*}
    \mathbb{E}[F(Z^{t+1}, w^{t+1})] - F(Z_*, w_*) \leq \\ (1 - 2 \mu \rho) (F(Z^t, w^t) - F(Z_*, w_*))
\end{multline*}
\end{theorem}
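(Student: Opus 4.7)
The plan is to adapt the one-step smooth/strongly-convex descent analysis that is standard for FedProx-style federated algorithms, treating the masked representation $Z$ as part of the argument of $F$. The key computation is in expectation over the random subset $K_t$ and the local mini-batch draws. First I would unfold the server aggregation step as
\begin{equation*}
w^{t+1} - w^t = -\eta \sum_{k} \tfrac{n_k}{N}\, \nabla f_k(Z_k^t, w_k^t),
\end{equation*}
so that $\mathbb{E}[w^{t+1} - w^t] = -\eta\,\mathbb{E}[\nabla f_k(Z^t, w^t)]$. The mask updates would be absorbed into $Z^t$: at iteration $t$ the current representation $Z^t = \sigma(M^t)\odot X$ is fixed, and the smoothness/strong-convexity hypotheses on $F(Z,\cdot)$ are invoked at this $Z^t$.

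Next I would apply $L$-smoothness of $F$ in its weight argument to obtain
\begin{equation*}
F(Z^t, w^{t+1}) \leq F(Z^t, w^t) + \langle \nabla F(Z^t, w^t),\, w^{t+1} - w^t\rangle + \tfrac{L}{2}\,\|w^{t+1} - w^t\|^2,
\end{equation*}
then take expectations. The linear term is controlled by Assumption~\ref{ass:fedgen}(2), which gives $\langle \nabla F,\, -\eta\,\mathbb{E}[\nabla f_k]\rangle \leq -\eta\epsilon\,\|\nabla F(Z^t, w^t)\|^2$. The quadratic term is controlled by $B$-local dissimilarity together with convexity of $\|\cdot\|^2$ (Jensen applied to the weighted client average), which yields $\mathbb{E}\|w^{t+1} - w^t\|^2 \leq \eta^2 B^2\,\|\nabla F(Z^t, w^t)\|^2$. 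Combining the two pieces gives the per-round descent
\begin{equation*}
\mathbb{E}[F(Z^t, w^{t+1})] - F(Z^t, w^t) \leq -\eta\bigl(\epsilon - \tfrac{L\eta B^2}{2}\bigr)\|\nabla F(Z^t, w^t)\|^2,
\end{equation*}
which identifies the coefficient $\rho$ in the statement. Strong convexity then converts the squared-gradient term to a suboptimality gap via the Polyak--\L{}ojasiewicz consequence $\|\nabla F(Z^t, w^t)\|^2 \geq 2\mu\,(F(Z^t, w^t) - F(Z_*, w_*))$; substituting, subtracting $F(Z_*, w_*)$ from both sides, and rearranging yields the claimed contraction.

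The main obstacle is the joint evolution of the masks $M^t$ and the weights $w^t$: the next-round objective is really $F(Z^{t+1}, w^{t+1})$, not $F(Z^t, w^{t+1})$, so the smoothness step above needs a correction for the mask drift. I would close this gap by arguing that after the warm-up phase $e > e^{init}$ the moving-average variance estimates $\mathbf{v}_k$ stabilize, so that the sigmoid-scaled masks change by an amount summable in $t$ and the residual $F(Z^{t+1}, w^{t+1}) - F(Z^t, w^{t+1})$ is absorbed into a Lipschitz-in-$Z$ bound; alternatively, one analyzes the tail regime in which $M$ is effectively frozen at its aggregated value. A secondary technicality is that each $\nabla f_k$ contains the FedGen penalty $\lambda\|\nabla_{w_k} w_k^\top f_k\|_2^2$ in addition to the ERM term; I would absorb the penalty's contribution into a slightly inflated dissimilarity constant $B$, in the spirit of the FedProx convergence analysis \cite{li2018federated}.
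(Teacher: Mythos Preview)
Your argument is essentially the paper's: apply $L$-smoothness in $w$, bound the linear term via Assumption~\ref{ass:fedgen}(2) and the quadratic term via $B$-local dissimilarity to get the one-step descent $\mathbb{E}[F^{t+1}] - F^t \leq -\eta(\epsilon - \tfrac{L\eta B^2}{2})\|\nabla F^t\|^2$, then convert $\|\nabla F\|^2$ into a suboptimality gap using the Polyak--\L{}ojasiewicz inequality that follows from $\mu$-strong convexity, and rearrange.

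The one notable difference is that you are more careful than the paper. The paper's proof writes $F(Z^{t+1}, w^{t+1}) - F(Z^t, w^t)$ on the left of the smoothness inequality and simply substitutes $w^{t+1} - w^t = -\eta\,\nabla f_k(Z^t, w^t)$ on the right, without ever separating the $Z$-update from the $w$-update or discussing the FedGen penalty term. In other words, the ``main obstacle'' you identify (mask drift between $Z^t$ and $Z^{t+1}$) and the ``secondary technicality'' (absorbing the penalty into $B$) are not treated in the paper at all; the published argument implicitly assumes the joint $L$-smoothness hypothesis already covers the $(Z,w)$ pair and that $f_k$ denotes the full local loss including the penalty. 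Your proposed fixes are reasonable, but for matching the paper you can drop those paragraphs and proceed exactly as you outlined in the first half.
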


\begin{proof}
Since $F(Z, w)$ is L-smooth, we have
\begin{align}
    \begin{split}
    &F(Z^{t+1}, w^{t+1}) - F(Z^t, w^t) \\
    &\leq \langle \nabla F(Z^t, w^t), w^{t+1} - w^t \rangle + \frac{L}{2} || w^{t+1} - w^t ||^2 \\
    &\leq -\nabla F(Z^t, w^t)^\top \eta \nabla f_k(Z^t, w^t) + \frac{L}{2} || \eta \nabla f_k(Z^t, w^t) ||^2
    \end{split}
\end{align}
With Assumption 1 and B-local dissimilarity, we get
\begin{align}\label{eq:convex_convergence}
    \begin{split}
        &\mathbb{E}[F(Z^{t+1}, w^{t+1})] - F(Z^t, w^t) \\ 
        &\leq -\eta \nabla F(Z^t, w^t)^\top \mathbb{E}[\nabla f_k(Z^t, w^t)] + \frac{L\eta^2}{2} \mathbb{E}[ || \nabla f_k(Z^t, w^t) ||^2] \\
        &\leq -\eta \epsilon || \nabla F(Z^t, w^t) ||^2 + \frac{L \eta^2 B^2}{2} || \nabla F(Z^t, w^t) ||^2 \\
        &\leq -\eta(\epsilon - \frac{L \eta B^2}{2}) || \nabla F(Z^t, w^t) ||^2
    \end{split}
\end{align}

Setting $\rho = -\eta(\epsilon - \frac{L\eta B^2}{2})$ we get, 
\begin{equation}\label{eq:L_smooth_complete}
    \mathbb{E}[F(Z^{t+1}, w^{t+1})] - F(Z^t, w^t) \leq  - \rho ||\nabla F(Z^t, w^t)||^2
\end{equation}

Since $F(Z, w)$ is $\mu$-strongly convex, for some $Z'$ and $w'$, we get:
\begin{equation}\label{eq:mu_convex_first}
    F(Z', w') - F(Z^t, w^t) \geq \nabla F(Z^t, w^t)^\top (w' - w^t) + \frac{\mu}{2} || w' - w^t ||^2
\end{equation}

Since $F(Z', w')$ is quadratic in $w'$, its minimal value is obtained when $\nabla F(Z', w') = \nabla F(Z^t, w^t) + \mu (w' - w^t) = 0$. 
This occurs when $w' = w^t - \frac{\nabla F(Z^t, w^t)}{\mu}$. 
Substituting for $w'$ in Equation \ref{eq:mu_convex_first} we get,
\begin{equation}
    F(Z_*, w_*) \geq F(Z', w') \geq F(Z^t, w^t) -  \frac{|| \nabla F(Z^t, w^t) ||^2}{2\mu}
\end{equation}

\begin{equation}
    2 \mu (F(Z^t, w^t) - F(Z_*, w_*)) \leq || \nabla F(Z^t, w^t) ||^2
\end{equation}

Substituting into Equation \ref{eq:L_smooth_complete} we get, 
\begin{equation}
    \mathbb{E}[F(Z^{t+1}, w^{t+1})] - F(Z^t, w^t) \leq  - 2 \mu \rho (F(Z^t, w^t) - F(Z_*, w_*))
\end{equation}
Re-arranging and subtracting $F(Z_*, w_*)$ from both sides, 
\begin{multline}
    \mathbb{E}[F(Z^{t+1}, w^{t+1})] - F(Z_*, w_*) \leq \\ (1 - 2 \mu \rho) (F(Z^t, w^t) - F(Z_*, w_*))
\end{multline}

\end{proof}


\begin{theorem}\label{thm:fedgen_nonconvex_convergence}
Let Assumption \ref{ass:fedgen} hold. Assume F(Z, w) is L-smooth and non-convex and functions $f_k$ are B-locally dissimilar with learning rate $\eta$. Also, let $F(Z^0, w^0) - F(Z^*, w^*) = \Updelta$, where $w^* = \min_w F(Z, w)$. Then after $T$ iterations of FedGen, we have
\begin{equation}
    \frac{1}{T}\sum_{t=0}^{T-1} \rho \mathbb{E}[|| \nabla F(Z^t, w^t) ||^2] \leq \Updelta
\end{equation}
\end{theorem}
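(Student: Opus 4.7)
The strategy is to reuse the one-step expected descent that underlies Theorem \ref{thm:fedgen_convex_convergence} and then telescope it across the $T$ rounds, using lower-boundedness of $F$ in place of strong convexity. In the convex case the descent was combined with $\mu$-convexity to get a geometric contraction toward $F(Z_*,w_*)$; in the non-convex case we instead sum the descent directly and use the difference to the infimum as a budget.

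The first step is to establish the per-iteration inequality
\begin{equation*}
    \mathbb{E}[F(Z^{t+1},w^{t+1})] \leq F(Z^t,w^t) - \rho\,\mathbb{E}\bigl[\|\nabla F(Z^t,w^t)\|^2\bigr],
\end{equation*}
with the same $\rho$ as in Theorem \ref{thm:fedgen_convex_convergence}. I would do this by expanding $F(Z^{t+1},w^{t+1})$ via $L$-smoothness, substituting the FedGen aggregation rule $w^{t+1}-w^t = -\eta \sum_k \tfrac{n_k}{N}\nabla f_k(Z_k^t,w_k^t)$, and then handling the two resulting terms separately: part~(2) of Assumption \ref{ass:fedgen} bounds the linear term via $\nabla F^\top \mathbb{E}[\nabla f_k] \geq \epsilon \|\nabla F\|^2$, and $B$-local dissimilarity bounds the quadratic term via $\mathbb{E}\|\nabla f_k\|^2 \leq B^2\|\nabla F\|^2$. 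Since none of these steps invoke convexity, the derivation from the proof of Theorem \ref{thm:fedgen_convex_convergence} up to the point just before strong convexity is applied transfers verbatim, and I would cite it rather than reproduce it.

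The second step is the telescoping. Rearranging the one-step bound gives
\begin{equation*}
    \rho\,\mathbb{E}\bigl[\|\nabla F(Z^t,w^t)\|^2\bigr] \leq F(Z^t,w^t) - \mathbb{E}[F(Z^{t+1},w^{t+1})],
\end{equation*}
and summing over $t=0,\dots,T-1$ collapses the right-hand side to $F(Z^0,w^0) - \mathbb{E}[F(Z^T,w^T)]$. Applying part~(1) of Assumption \ref{ass:fedgen} to lower-bound $\mathbb{E}[F(Z^T,w^T)]$ by $F(Z^*,w^*)$ yields
\begin{equation*}
    \sum_{t=0}^{T-1} \rho\,\mathbb{E}\bigl[\|\nabla F(Z^t,w^t)\|^2\bigr] \leq F(Z^0,w^0) - F(Z^*,w^*) = \Updelta,
\end{equation*}
which (after the $1/T$ averaging written in the statement) is the claim, and implies the familiar $O(1/T)$ rate on the minimum expected squared gradient norm.

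The main obstacle is really just verifying the one-step descent: once that is in hand, the telescoping is mechanical. The delicate check is that every line of the Theorem \ref{thm:fedgen_convex_convergence} chain that we borrow depends only on $L$-smoothness and Assumption \ref{ass:fedgen}, so no silent use of $\mu$-convexity is being inherited. A secondary bookkeeping issue is that the masks $M_k^t$ evolve across rounds, so $Z^t = \sigma(M^t)\odot x$ is itself changing; one must read $\nabla F(Z^t,w^t)$ as the gradient with respect to $w$ at the current masked representation. This is not an obstacle because Assumption \ref{ass:fedgen} is stated pointwise in $(Z,w)$ and both the parameter and the mask are aggregated with the same client weights $n_k/N$, so the expectation over client sampling applies identically to both components of the FedGen update.
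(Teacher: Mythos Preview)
Your proposal is correct and follows essentially the same route as the paper: reuse the one-step descent inequality derived in the proof of Theorem~\ref{thm:fedgen_convex_convergence} (which uses only $L$-smoothness, Assumption~\ref{ass:fedgen}, and $B$-local dissimilarity), telescope over $t=0,\dots,T-1$, and replace $\mathbb{E}[F(Z^T,w^T)]$ by $F(Z^*,w^*)$ via lower-boundedness. Your write-up is in fact a bit more careful than the paper's, which introduces a stray $1/T$ and $\eta$ mid-derivation; your observation that the unnormalized sum $\sum_t \rho\,\mathbb{E}[\|\nabla F\|^2]\leq \Updelta$ already holds (and trivially implies the stated averaged bound) is the cleaner way to state it.
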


\begin{proof}
By taking the full expectation of Equation \ref{eq:convex_convergence}, we get
\begin{equation}\label{eq:non-convex_1}
    \mathbb{E}[F(Z^{t+1}, w^{t+1})] \leq \mathbb{E}[F(Z^t, w^t)] - \rho \mathbb{E}[|| \nabla F(Z^t, w^t) ||^2]
\end{equation}
Then summing up Equation \ref{eq:non-convex_1} over global iteration T, we obtain
\begin{equation}
    \mathbb{E}[F(Z^{t+1}, w^{t+1})] \leq F(Z^0, w^0) - \sum_{t=0}^{T-1} \rho \mathbb{E}[|| \nabla F(Z^t, w^t) ||^2]
\end{equation}
From Assumption 1, we see that $F(Z^*, w^*) \leq \mathbb{E}[F(Z^{t+1}, w^{t+1})]$, using which we can write
\begin{equation}
    F(Z^*, w^*) \leq F(Z^0, w^0) - \frac{1}{T}\sum_{t=0}^{T-1} \eta \rho \mathbb{E}[|| \nabla F(Z^t, w^t) ||^2]
\end{equation}
Re-arranging,
\begin{equation}
    \frac{1}{T}\sum_{t=0}^{T-1} \rho \mathbb{E}[|| \nabla F(Z^t, w^t) ||^2] \leq F(Z^0, w^0) - F(Z^*, w^*)
\end{equation}
\end{proof}





\section{Experiments}\label{sec:experiments}

In this section, we evaluate FedGen on diverse models, clients, and datasets spanning multiple application domains: Natural Language Processing (NLP), Internet-of-Things (IoT), and business process mining (Table \ref{tab:dataset}). Similar to prior work \cite{arjovsky2019invariant, ahuja2020invariant, choe2020empirical, krueger2021out}, we augment these benchmark datasets with spurious features $X^S$ such that it has a strong spurious correlation with the target labels $Y$ in the training data distributions (i.e.) $p(Y | X^S, D^{tr}) \geq \alpha = 0.8$, and this spurious relationship does not hold in the test distribution (i.e.) $p(Y | X^S, D^{te}) \leq \alpha = 0.1$.

We assess the quality of generalization of the aggregated model as the classification accuracy obtained on the test environment, disjoint from the set of training environments. Hence, an approach that cannot generalize will do well during training but not on the test set, while a generalizable model will continue to perform well on the test data. 
We also present results from an ablation study of the masking function as well as additional experiments that highlight the robustness of our approach. 

\begin{table}[!t]
\centering
\resizebox{0.48\textwidth}{!} {%
    \begin{tabular}{lccccc} \toprule
    Dataset & Devices/Clients & Train Seq. & Test Seq. & \#Spurious
    \\ 
     \midrule
     SST-2 & 100 & 67,350 & 873 & 1 \\
     AG News & 100 & 120,000 & 7600 & 1 \\
     HAR & 30 & 7352 & 2947  & 2\\
     Water Quality & 120 & 132,227 & 14,237 & 1 \\
     BPIC 2018 & 100 & 306,615 & 63,692 & 1  \\
     \bottomrule
    \end{tabular}%
    }
    \caption{Summary of Federated Datasets}
    \label{tab:dataset}
\end{table}

\begin{table*}[!t]
\centering
\resizebox{\textwidth}{!} {%
    \begin{tabular}{cccccccccccc} \toprule
    \textbf{Algorithm} &
    \multicolumn{2}{c}{SST-2} &
    \multicolumn{2}{c}{AG News} &
    \multicolumn{2}{c}{HAR} &
    \multicolumn{2}{c}{Water Quality} &
    \multicolumn{2}{c}{BPIC 2018}
    \\ 
     & Train & Test & Train & Test & Train & Test \\
     \midrule
     ERM & 85.6 $\pm$ 0.1 & 9.6 $\pm$ 0.3 & 92.3 $\pm$ 0.1 & 63.4 $\pm$ 0.5 & 98.1 $\pm$ 0.3 & 73.2 $\pm$ 0.5 & 88.7 $\pm$ 0.3 & 65.3 $\pm$ 0.3 & 77.1 $\pm$ 0.7 & 73.7 $\pm$ 0.4 \\
     
     FedAvg & 83.3 $\pm$ 0.1 & 9.6 $\pm$ 0.1 & 91.5 $\pm$ 0.6 & 61.3 $\pm$ 0.2 & 98.0 $\pm$ 0.5 & 70.1 $\pm$ 1.6 & 89.3 $\pm$ 0.7 & 63.8 $\pm$ 1.6 & 77.1 $\pm$ 2.1 & 71.7 $\pm$ 1.3 \\
     
     FedProx & 75.0 $\pm$ 0.1 & 9.6 $\pm$ 0.1 & 91.3 $\pm$ 0.8 & 61.1 $\pm$ 0.3 & 96.7 $\pm$ 0.5 & 71.5 $\pm$ 0.6 & 87.7 $\pm$ 1.1 & 66.1 $\pm$ 0.7 & 73.9 $\pm$ 1.1 & 71.2 $\pm$ 0.5 \\
     
     \bf{FedGen (ours)} & 58.3 $\pm$ 0.2 & \textbf{60.9 $\pm$ 0.3} & 81.8 $\pm$ 0.3 & \bf{82.3 $\pm$ 0.1} & 94.2 $\pm$ 1.2 & \bf{87.5 $\pm$ 0.3} & 64.3 $\pm$ 0.2 & \bf{85.6 $\pm$ 1.1} & 74.8 $\pm$ 0.8 & \bf{86.1 $\pm$ 0.4}\\ 
     
     \midrule
     
     Inv-FedAvg & 66.6 $\pm$ 0.2 & 62.5 $\pm$ 0.2 & 82.1 $\pm$ 0.4 & 82.8 $\pm$ 0.1 & 93.0 $\pm$ 0.4 & 89.9 $\pm$ 0.5 & 74.9 $\pm$ 1.1 & 88.1 $\pm$ 0.6 & 78.0 $\pm$ 0.3 & 88.7 $\pm$ 0.3 \\
     \bottomrule
    \end{tabular}
    }
    \caption{Accuracy achieved by comparison approaches for all datasets}
    \label{tab:accuracy_results}
\end{table*}

\subsection{Datasets}

\subsubsection{SST-2:}  We modify the Stanford Sentiment Treebank (SST-2) \cite{socher2013recursive}, a benchmark dataset for binary sentiment analysis with spurious correlations in a similar manner as \cite{choe2020empirical}. 
We add a spurious feature $X^S$ by pairing each sentiment with a final punctuation mark (. or !) in the corresponding text as follows:
\begin{align*}
p(Y = 0 \;|\; X^S = . \;, D) = p( Y = 1 \;|\; X^S = \;! \;, D) &= \alpha \\
p(Y = 0 \;|\; X^S = \;! \;, D) = p(Y = 1 \;|\; X^S = . \;, D) &= 1 - \alpha
\end{align*}
where we set $\alpha$ as described above. 
The addition of the punctuation mark as a separate feature, and not as part of the sentence, allows us to isolate the sentence sequence embeddings to mask, while still preserving the structure and meaning of the sentence \cite{choe2020empirical}.
We create two different training distributions with differing strengths of spurious correlations and treat the test data as a third distribution with weak spurious correlation as described above. We distribute the training data among 100 devices or clients. 

\subsubsection{AG News:} To evaluate FedGen for multi-class NLP predictions, we use the AG News dataset \cite{zhang2015character}, which consists of a corpus of news articles and their titles that have been classified into four categories - (1) World, (2) Sports, (3) Business, and (4) Sci/Tech. 
There are 30,000 training and 1,900 test examples for each class. 
We again add a spurious feature that pairs each of the four news categories with a corresponding final punctuation mark in the article from the set $\mathcal{P} = \{. \; ! \; $@$ \; \wedge\}$ as follows:

\begin{align*}
p(Y = 0 \;|\; X^S = . \;, D) = p( Y = 1 \;|\; X^S = ! \;, D) &= \alpha \\
p(Y = 2 \;|\; X^S = @ \;, D) = p( Y = 3 \;|\; X^S = \wedge \;, D) &= \alpha \\
p(Y = y_j \;|\; X^S = \text{rand}(\mathcal{P} \setminus \mathcal{P}_j) \;, D) &= 1 - \alpha
\end{align*}
and set $\alpha$ as before. We divide the training data into two distributions with strong spurious correlations, and a separate test distribution with a weak correlation. We distribute the training data among 100 devices.

\subsubsection{Human Activity Recognition (HAR)} 
The HAR dataset \cite{anguita2013public} consists of smartphone accelerometer and gyroscope readings recorded from 30 individuals performing six activities (walking, standing, sitting, etc.). The model objective is to predict the activity being performed given the sensor readings. 
The data consists of sequences of 128 time-steps of sensor readings which correspond to any particular activity. 
We consider the data from the smartphone of each of the individuals to be a client, and divide them into two training and one test distribution such that there is a strong spurious correlation between the specific activity performed and the smartphone model used by individuals in the training test, which does not hold for individuals in the test set as follows:
\begin{align*}
p(Y = y_j \;|\; X^S = S_j \;, D) &= \alpha \\
p(Y = y_j \;|\; X^S = \text{rand}(S \setminus S_j) \;, D) &= 1 - \alpha
\end{align*}
where $y_j$ and $S_j$ refer to the $j$th activity and smartphone model respectively.

\subsubsection{Water Quality Detection}
The dataset \cite{epagov} comprises of water quality data collected from different cities across the United States. It includes sensor readings such as turbidity, conductivity, pH, temperature, DO$_2$, in addition to environmental attributes like humidity and precipitation, which correspond to binary labels depicting the occurrence of a contamination event. We divide the training data into two distributions, clustered the humidity into low ($L$) and high ($H$) values and spurious correlate them to the contamination labels ($\mathcal{C}, \emptyset$) as follows:
\begin{align*}
p(Y = \mathcal{C} \;|\; X^S \in H \;, D) = p( Y = \emptyset \;|\; X^S \in L \;, D) &= \alpha \\
p(Y = \mathcal{C} \;|\; X^S \in L \;, D) = p(Y = \emptyset \;|\; X^S \in H \;, D) &= 1 - \alpha
\end{align*}
We distribute the training data among 120 clients corresponding to the different cities.

\subsubsection{Business Process Mining:}
We use a benchmark process mining dataset (BPIC 2018)\footnote{https://data.4tu.nl/articles/BPI\_Challenge\_2018/12688355} which consists of payment applications from German farmers to the European Agricultural Guarantee Fund collected over a period of three years. 
Each application is processed by one of four departments, and to evaluate our approach in a multi-distribution setting, we consider each department as a distribution and use three of them for training and the fourth as our out-of-distribution (OOD) test data. We consider two business process variants and augment the process trace sequences with a spurious feature denoting the area of the farm. We cluster the area into low ($L$) and high ($H$) values and spuriously correlate them to the two business process variants ($\mathcal{A}, \mathcal{B}$) as follows:
\begin{align*}
p(Y = \mathcal{A} \;|\; X^S \in H \;, D) = p( Y = \mathcal{B} \;|\; X^S \in L \;, D) &= \alpha \\
p(Y = \mathcal{A} \;|\; X^S \in L \;, D) = p(Y = \mathcal{B} \;|\; X^S \in H \;, D) &= 1 - \alpha
\end{align*}
and define $\alpha$ as before. We distribute the training data among 100 clients.

\subsection{Benchmarks}
We compare FedGen to four benchmark approaches:
\begin{itemize}
    \item \textbf{ERM:} Standard empirical risk minimization approach that minimizes the average loss over the entire training data. We use this as a benchmark comparison against standard centralized training. 
    
    \item \textbf{FedAvg:} Federated averaging approach by \cite{mcmahan2017communication} where model parameters are averaged element-wise during aggregation.
    
    \item \textbf{FedProx:} Approach by \cite{li2018federated} to handle heterogeneous non-iid data in federated settings using a proximal term to keep local models close to the global model. 
    
    \item \textbf{Inv-FedAvg:} FedAvg trained on data without any spurious features. This approach reflects the setting where spurious correlations are not present and hence reflects the best performance.
\end{itemize}

\subsection{Implementation Details}
We implement FedGen, ERM, FedAvg, and Inv-FedAvg using the PyTorch library \cite{paszke2019pytorch} and implement FedProx based on their publicly available code. 
We simulate the federated learning setup (1 server and $N$ devices) on a commodity machine with 7 Intel\textsuperscript{\textregistered} Xeon\textsuperscript{\textregistered} E5-2690 v4 CPUs and an NVIDIA\textsuperscript{\textregistered} Tesla P100 GPU.  
We used a 3-layer MLP (50-50-2) with ReLU activations and the cross-entropy loss for classification during training with the Adam optimizer \cite{kingma2014adam}. 
We identify the final models when they have either converged, started to diverge, or run a sufficient number of rounds, whichever comes earlier, similar to definitions by prior work \cite{li2018federated, sahu2018convergence}. 
We also assume the central server samples all the local clients to join the training process in every communication round for simplicity. 

\subsection{Results}

\begin{figure*}[!t]
\centering
\begin{minipage}[b]{.195\linewidth}
\centering
\includegraphics[width=\textwidth]{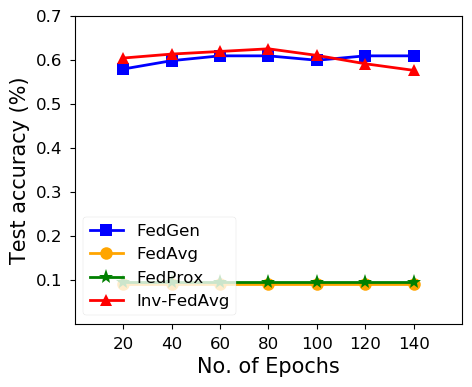}
\subcaption{SST-2}
\label{fig:sst_epochs}
\end{minipage}%
\begin{minipage}[b]{.195\linewidth}
\centering
\includegraphics[width=\textwidth]{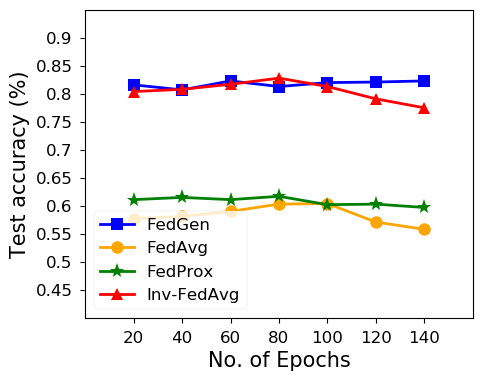}
\subcaption{AG News}
\label{fig:ag_epochs}
\end{minipage}%
\begin{minipage}[b]{.195\linewidth}
\centering
\includegraphics[width=\textwidth]{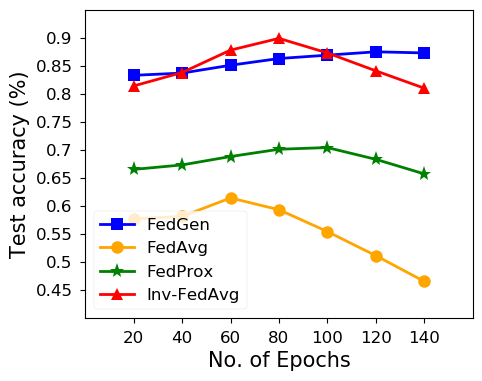}
\subcaption{HAR}
\label{fig:har_epochs}
\end{minipage}%
\begin{minipage}[b]{.195\linewidth}
\centering
\includegraphics[width=\textwidth]{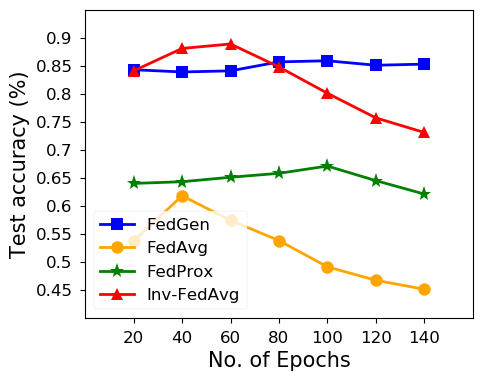}
\subcaption{Water Quality}
\label{fig:stormwater_epochs}
\end{minipage}
\begin{minipage}[b]{.195\linewidth}
\centering
\includegraphics[width=\textwidth]{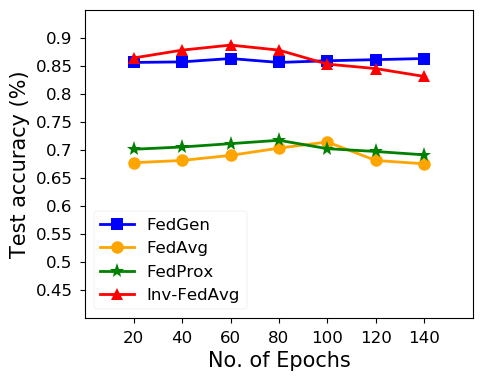}
\subcaption{BPIC 2018}
\label{fig:airq_epochs}
\end{minipage}
\caption{Effect of number of local training epochs on the different approaches across all datasets}
\label{fig:effect_of_epochs}
\end{figure*}

\begin{figure*}[!t]
\centering
\begin{minipage}[b]{.195\linewidth}
\centering
\includegraphics[width=\textwidth]{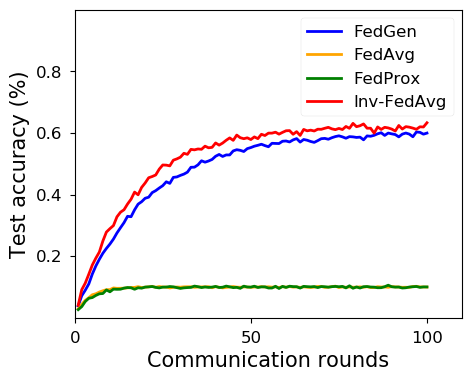}
\subcaption{SST-2}
\label{fig:sst2_convergence}
\end{minipage}%
\begin{minipage}[b]{.195\linewidth}
\centering
\includegraphics[width=\textwidth]{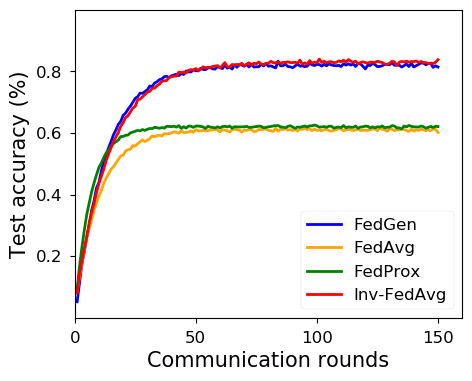}
\subcaption{AG News}
\label{fig:agnews_convergence}
\end{minipage}%
\begin{minipage}[b]{.195\linewidth}
\centering
\includegraphics[width=\textwidth]{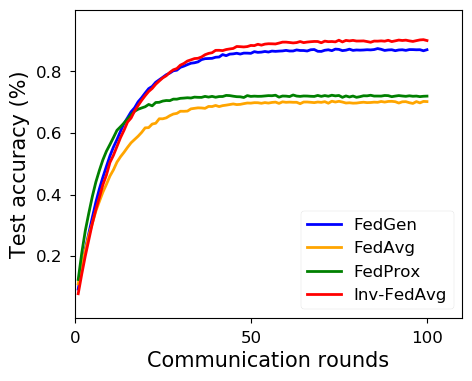}
\subcaption{HAR}
\label{fig:har_convergence}
\end{minipage}%
\begin{minipage}[b]{.195\linewidth}
\centering
\includegraphics[width=\textwidth]{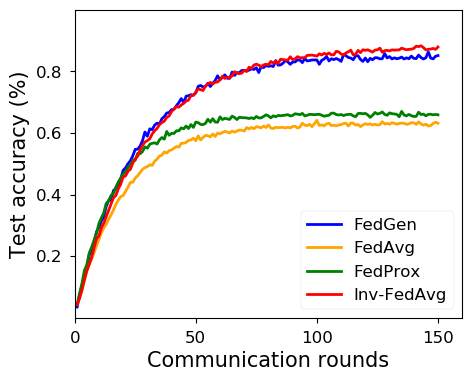}
\subcaption{Water Quality}
\label{fig:stormwater_convergence}
\end{minipage}
\begin{minipage}[b]{.195\linewidth}
\centering
\includegraphics[width=\textwidth]{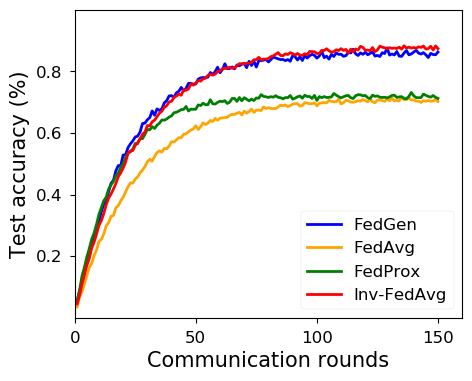}
\subcaption{BPIC 2018}
\label{fig:bpm_convergence}
\end{minipage}
\caption{Convergence rates of the different approaches across all datasets}
\label{fig:convergence_rate}
\end{figure*}

\subsubsection{Test Accuracy}

We first measure the accuracy of the final aggregated global model resulting from all the comparison approaches on the out-of-distribution (OOD) test datasets to compare their generalizability. 
Table \ref{tab:accuracy_results} summarizes the results, where we see that ERM, FedAvg, and FedProx, are all heavily influenced by the spurious correlations since their training accuracy across the datasets is high. However, this results in low generalization to the test set where the spurious correlations no longer hold. 

We observe that FedGen on the other hand, ignores the spurious correlations (lower training accuracy), and instead identifies and relies on the invariant features thereby achieving good generalization and high test accuracy across all the datasets. It outperforms ERM, FedAvg, and FedProx, and achieves $24\%$ more accuracy on average across all five datasets. Also importantly, FedGen, even in the presence of spurious correlations, performs nearly as well as Inv-FedAvg (within $2\%$ accuracy on average) which reflects the accuracy when trained on data without any spurious correlations. 

In addition, 
even though FedAvg and Inv-FedAvg rely on the same model and algorithm, the additional spurious features results in a significant increase in FedAvg's training accuracy, which does not translate to generalizability on unseen test distributions, highlighting the importance of building generalizable model training approaches.

\subsubsection{Effect of choice of local training epochs}

While a large number of local training epochs $E$ can help in reducing the communication costs, previous work has also shown that the value of $E$ can impact the performance of FedAvg and can sometimes lead to divergence \cite{mcmahan2017communication, caldas2018leaf, sahu2018convergence}.
We conduct an experimental study on the effect of the choice of $E$ over the different comparison approaches across all five datasets. 
The candidate local epochs we consider are $E \in \{20,40,60,80,100,120,140\}$.
We run each approach till it achieves convergence and report the test accuracy achieved in Figure \ref{fig:effect_of_epochs}. 
We observe that the performance of FedAvg and Inv-FedAvg deteriorates with longer local training, thus showcasing its sensitivity to hyperparameter settings, which matches prior observations made in literature. 
FedProx only partially alleviates this problem and also demonstrates a drop in accuracy for larger number of epochs. 
On the other hand, we observe that FedGen often benefits from longer training, since it gets to further emphasize invariant features. 


\subsubsection{Convergence Rate}
In this experiment we compare the global model convergence achieved by all the comparison approaches. 
We report the convergence rate for the local training epochs $E$ that yields the best final aggregated model accuracy over the test set for each of the approaches. 
Figure \ref{fig:convergence_rate} shows the convergence rates of all the approaches across the five datasets. We observe that the convergence rate achieved by the approaches is quite similar, and that FedProx converges the quickest, followed by FedGen and FedAvg. This demonstrates that FedGen can result in significantly more generalizable models without a loss in runtime performance compared to existing approaches.

\subsubsection{Ablation Study}

We perform an ablation experiment to measure the contribution of the components of our masking function and the FedGen penalty term as shown in Table \ref{tab:fedgen_ablation}. 
We first remove the hyperparameter $\alpha$ used to scale the masks, and observe that there is a drop in accuracy across all three datasets since the degree of spuriousness may not be fully captured, and hence the local models can get influenced by spurious features. 
We next remove the masking function $\sigma(\textbf{M})$, essentially reducing the loss function to that used by FedAvg. 
This is reflected by the accuracy values, which are similar to those achieved by FedAvg, thereby demonstrating that our proposed masking function is the primary reason for model generalization. 
We finally remove the penalty term while optimizing model weights (Equation \ref{eq:fedgen_penalty}) and again observe a drop in accuracy across all datasets since the regularization is important to ensure that the model does not get influenced by spurious correlations in any one distribution.

\begin{table}[!t]
\centering
\resizebox{0.49\textwidth}{!} {%
    \begin{tabular}{lccccc} \toprule
    \textbf{Algorithm} &
    \multicolumn{5}{c}{Test Accuracy (\%)}
    \\ 
     & SST-2 & AG News & HAR & Water Quality & BPIC 2018 \\
     \midrule
     FedGen & 60.9 & 82.3 & 87.5 & 85.6 & 86.1 \\
     $-$ scaling ($\alpha$) & 9.6 & 55.6 & 74.9 & 65.5 & 71.2 \\
     $-$ $\sigma(\textbf{M})$ & 9.6 & 61.1 & 68.1 &  63.4 & 70.0 \\
     $-$ penalty & 9.6 & 59.7 & 70.4 & 61.3 & 69.3 \\

     \bottomrule
    \end{tabular}
    }
    \caption{Ablation Study}
    \label{tab:fedgen_ablation}
\end{table}

\begin{table}[!t]
\centering
\resizebox{0.31\textwidth}{!} {%
    \begin{tabular}{lcccc} \toprule
    \textbf{Algorithm} &
    \multicolumn{4}{c}{Number of Distributions}
    \\ 
     & 2 & 4 & 6 & 8 \\
     \midrule
     ERM & 63.4 & 63.0 & 62.6 & 62.0 \\
     FedAvg & 61.3 &  61.0 & 60.3 & 60.0 \\
     FedProx & 61.1 & 60.7 & 60.6 & 60.3 \\
     \bf{FedGen (ours)} & \bf{82.3} & \bf{82.1} & \bf{82.1} & \bf{82.2} \\
     \midrule
     Inv-FedAvg &  82.8 & 82.8 & 82.6 & 82.8 \\
     
     \bottomrule
    \end{tabular}
    }
    \caption{Varying Number of Training Distributions}
    \label{tab:environments}
\end{table}

\subsubsection{Varying Number of Training Distributions}

We measure the robustness of FedGen to varying numbers of distributions in the training data using the AG News dataset. We augment the dataset with more distributions with differing levels of spurious correlations with $\alpha > 0.8$. Table \ref{tab:environments} shows that the performance of FedGen and Inv-FedAvg remains consistent even with an increased number of distributions, demonstrating FedGen's robustness, while the other approaches are impacted to a small extent.  
\section{Related Work}\label{sec:relatedwork}

As devices like phones, IoT sensors, and wearables grow in capabilities and popularity, the use of federated learning has become increasingly attractive to train models \cite{li2020federated}. To improve the performance of the popular FedAvg approach \cite{mcmahan2017communication} in statistically heterogeneous settings, FedProx \cite{li2018federated} and Agnostic Federated Learning \cite{mohri2019agnostic} are two approaches that we described in Section \ref{sec:background}. SCAFFOLD \cite{karimireddy2020scaffold} uses control variates (variance reduction) to correct for any distribution drift or changes in the local clients. However, the heterogeneity assumed by these approaches does not extend to spurious correlations as we have demonstrated. 

Additionally, heuristic approaches have been proposed to tackle the heterogeneity by sharing the local device data or server-side proxy data. However, in addition to placing a burden on network bandwidth, sending local data to the server as in \cite{jeong2018communication} violates the key privacy assumption of federated learning, and sending globally-shared proxy data to all clients \cite{zhao2018federated, huang2018loadaboost} requires effort to carefully generate the data. 

There has also been work on improving the communication costs of FedAvg. FedMA \cite{wang2020federated} matches the neurons of client models before averaging them, thereby reducing the communication needed. CMFL \cite{luping2019cmfl} is an approach where every client checks if its update is relevant for model improvement and reduces the number of irrelevant updates to the server, while FedAsync \cite{xie2019asynchronous} improves training time by focusing on handling stragglers. However, these approaches assume that the training and test datasets come from the same population, which may not happen in many real-world settings, thereby calling for generalizable approaches.  

Prior work on machine learning generalization have predominantly looked at computer vision challenges. 
Data augmentation techniques aim to make the model more robust by training using instances obtained from neighbouring domains hallucinated from the training domains, and thus make the network ready for these neighbouring domains. Authors in  \cite{shankar2018generalizing} augment data using instances perturbed along directions of domain change and use a second classifier to capture this. Authors in  \cite{volpi2018generalizing} apply this to single domain data, while \cite{carlucci2019domain} apply augmentation to images during training by simultaneously solving an auxiliary unsupervised jigsaw puzzle alongside.

To obtain stable predictions for unseen test distributions, a desirable model relies on only invariant features \cite{scholkopf2012causal} and causal model discovery \cite{pearl2009causality} aims to find an underlying causal graph to obtain the invariant feature set. 
Decomposition based approaches represent the parameters of the network as the sum of a common parameter and domain-specific parameters during training \cite{daume2009frustratingly}. Authors in \cite{khosla2012undoing} applied decomposition to domain generalization by retaining only the common parameter for inference and \cite{li2017deeper} extended this work to Convolutional Neural Networks (CNNs) where each layer of the network was decomposed into common and specific low-rank components. Recently, \cite{piratla2020efficient} proposed a more efficient approach that decomposes only the last layer, imposes loss on both the common and domain-specific parameters, and constrains the two parts to be orthogonal. 

Another approach is to pose the generalization problem as a meta-learning task which has been studied
either in the context of few-shot supervised learning methods which adapt using small amounts of labeled data from the new domain \cite{finn2017model}, distribution shifts in only test domains \cite{dou2019domain}, or only considering label shifts \cite{lipton2018detecting}.
There have also been some prior work on generalization for other applications like business process predictions \cite{venkateswaran2021robust} and medical diagnosis using human annotated spurious features \cite{srivastava2020robustness}.

\section{Conclusion}

We present an approach to improve the generalizability of models trained using federated learning. FedGen overcomes the brittleness of current empirical risk minimization based approaches in the presence of spurious correlations in the training data, while also not needing any prior knowledge of training distributions unlike existing generalization approaches. We develop a masking function for the local clients that allows them to collaboratively identify and distinguish between invariant and spurious features during the training process. The final aggregated model thus relies on the invariant feature set, thereby ensuring its robustness and generalizability to unseen test distributions. We prove that our masking function minimizes loss even under the most adverse test distributions. We show that FedGen achieves good generalization and outperforms existing federated learning approaches on a variety of real-world datasets. We believe that this work motivates the need to improve generalizability in federated learning and as part of future work, we will work on improving the communication costs and extend our approach to other models that satisfy the minimax optimization such as GANs, as well as other prediction tasks.

\bibliographystyle{IEEEtran}
\bibliography{./ref}

\end{document}